%
\RequirePackage{amssymb}
\RequirePackage{amsmath}
\documentclass[runningheads]{llncs}
\usepackage{graphicx}
%

\newcommand{\nonprob}{\mathrm{non\text{-}prob}}


\makeatletter
\def\blfootnote{\xdef\@thefnmark{}\@footnotetext}
\makeatother

\begin{document}
\title{Causal Modeling with Probabilistic Simulation Models\thanks{Thanks to Thomas Icard for helpful discussions. The author was supported by the Sudhakar and Sumithra Ravi Family Graduate Fellowship in the School of Engineering at Stanford University for this work.}}
%
%
\author{Duligur Ibeling
}
\authorrunning{D. Ibeling}
%
\institute{Stanford University, Stanford, CA, USA\\
\email{duligur@stanford.edu}}
\maketitle              
\begin{abstract}
\blfootnote{In \emph{Proceedings of the 5th Workshop on Probabilistic Logic Programming} (PLP 2018).}
Recent authors have proposed analyzing conditional reasoning through a notion of intervention on a simulation program, and have found a sound and complete axiomatization of the logic of conditionals in this setting. Here we extend this setting to the case of probabilistic simulation models. We give a natural definition of probability on formulas of the conditional language, allowing for the expression of counterfactuals, and prove foundational results about this definition. We also find an axiomatization for reasoning about linear inequalities involving probabilities in this setting. We prove soundness, completeness, and $\mathsf{NP}$-completeness of the satisfiability problem for this logic.

\keywords{Counterfactuals \and conditional reasoning  \and probabilistic programs \and conditional simulation.}
\end{abstract}
\section{Introduction}
Accounts of subjunctive conditionals based on internal \emph{causal models}
offer an alternative to approaches based on ranking possible worlds by similarity \cite{lewis73}.
One might, e.g., employ \emph{structural
equation models} (SEMs), i.e. systems of equations connecting the values of relevant variables,
as the causal model;
the semantics of conditionals are then based on a precise notion of \emph{intervention} on
the SEM \cite{Pearl2009}.
Recently, some authors \cite{AC17,BLOG,Goodman2014,DBLP:books/cu/p/FreerRT14,chater2013programs}
have proposed using arbitrary programs, rather than systems of equations, as causal models.
This approach emphasizes the procedural nature of many internal causal simulations over the purely declarative SEMs.

It is possible to define precisely this idea of programs as causal models and to generalize
the idea of intervention from SEMs to programs \cite{AC17}.
It is also possible to give a sound and complete 
logic of conditionals in this setting \cite{ibeling}.
However, these preliminary results
have not fully explored the very important case---from, e.g.,
the Bayesian Logic modeling language
\cite{BLOG} and implicit in the use of probabilistic programs as cognitive models \cite{DBLP:books/cu/p/FreerRT14}---of conditionals in a probabilistic setting, via using stochastic programs
as the underlying causal model.
In the present contribution we will establish
foundational definitions and logical results for this setting,
thus
extending the causal simulation framework to probabilistic simulation programs.
Probabilities over a causal modeling language are defined and results showing that they may
actually be interpreted as probabilities are given.
The probabilities are used to give the semantics of a language for probabilistic reasoning, for which
an axiomatization is given.
The language and axiomatization are extensions of an analogous probabilistic language
considered for the purely propositional case by \cite{Fagin}.
Soundness and completeness of the axiom system is proven,
and the satisfiability problem is found to be $\mathsf{NP}$-complete.

\section{Probabilistic Simulation Models and the Logical Language}
\subsection{Simulation Models}
We work toward the definition of a language $\mathcal{L}$ for expressing
probabilities involving probabilistic simulation models.
Probabilistic simulation models extend the non-probabilistic\footnote{
			The use of ``non-probabilistic'' rather than ``deterministic'' is intended to prevent confusion of
			the probabilistic/non-probabilistic distinction with the deterministic Turing
			machine/non-deterministic Turing machine distinction.
			The former distinction is about the presence of a source of randomness while the latter is about the number of possible halting executions.} causal simulation models of
\cite{AC17,ibeling}.
Formally, a \emph{non-probabilistic simulation model} is a 
Turing machine\footnote{
\cite{ibeling} does not require these machines to be deterministic, and isolates
an additional logical principle that is valid when the machines are deterministic. However here
we will suppose ``non-probabilistic simulation model'' always refers to one whose Turing machine
is deterministic. This definition is more useful for comparison with the probabilistic case, in which
all underlying machines are deterministic.
},
and a \emph{probabilistic simulation model} is a probabilistic Turing machine, i.e., a deterministic
Turing machine (that of course still has a read-write memory tape)
given read access to a random bit tape whose squares represent the results
of independent fair coin flips.
The use of Turing machines is meant to allow for complete generality and encompasses, e.g., both
logic programming and imperative programming.
We sometimes use intuitive pseudocode in describing simulation models; such pseudocode is
readily convertible to Turing machine code.

We suppose that simulation models are run initially from an empty tape.\footnote{
	\cite{ibeling} also includes an initial input tape in the definition of the model. This difference is inconsequential.
	}
As a simulation model runs,
it reads and writes the values of binary variables on its tape squares.
Eventually, the model either halts with some resultant tape,
or does not halt, depending on the results of the coin flips the model performs
in the course of its simulation. Every probabilistic simulation model
thus induces a distribution on these possible outcomes.
We are interested not only in these outcomes, but also in the
dynamics and counterfactual
information embodied in the model. That is, we are interested in what
\emph{would} happen were we to hold the values of the tape square variables fixed in a particular way
that counterfactually differs from the actual values the squares take on---in the
distribution over outcomes that results under a particular \emph{intervention}:
\begin{definition}[Intervention \cite{AC17}]
Let $S$ be a specification of binary values for
a finite number of tape squares:
$S = \{x_i\}_{i \in I}$ for
a finite index set
$I \subseteq \mathbb{N}$.
Then the
intervention $\mathcal{I}_S$ is a computable function
from Turing machines to 
Turing machines specified in the following way.
Given a machine $\mathsf{T}$, the intervened machine $\mathcal{I}_S(\mathsf{T})$
does the same thing as $\mathsf{T}$ but holds the variables in $S$ to their fixed
values specified by $S$ throughout the run.
That is,
$\mathcal{I}_S(\mathsf{T})$ first writes $x_i$ to square $i$ for all $i \in I$, then runs $\mathsf{T}$
while ignoring any writes to any of the squares whose indices are in $I$.
\end{definition}

Suppose one fixes the entire random bit tape to some particular sequence in $\{0, 1\}^{\infty}$.
Then the counterfactual, as well as actual,
behavior of a probabilistic simulation model is completely non-probabilistic.
We define first a basic language
that allows us to express facts about such behavior.
Then we will define the probability that a given
probabilistic simulation model satisfies a formula of this basic language.
Our final language $\mathcal{L}$ uses these probabilities---it
thus expresses facts about the \emph{probabilities} that counterfactual properties
hold.
In all logical expressions we help ourselves to these standard notational
conventions:
$\alpha \rightarrow \beta$ abbreviates
$\lnot \alpha \lor \beta$,
and $\alpha \leftrightarrow \beta$ denotes
$(\alpha \rightarrow \beta) \land (\beta \rightarrow \alpha)$.

\subsection{The Basic Language} 
\subsubsection{Syntax}
The basic, non-probabilistic language $\mathcal{L}_{\nonprob}$ is a propositional language over conditionals. Formally:
\begin{definition}
Let $X$ be a set of atoms $\{X_1, X_2, X_3, \dots\}$ representing the values of the memory tape variables
and let $\mathcal{L}_{\mathrm{prop}}$ be the propositional language formed by closing $X$
off under conjunction, disjunction, and negation.

	Let the intervention specification language $\mathcal{L}_{\mathrm{int}} \subset \mathcal{L}_{\mathrm{prop}}$
be the language of
purely conjunctive, ordered formulas of unique literals,\footnote{
The point being that such formulas are in one-to-one correspondence with specifications of interventions,
i.e., finite lists of variables along with the values each is to be held fixed to.
} i.e., formulas of the form $l_{i_1} \land \dots \land l_{i_n}$ for some $n \ge 0$, where $i_j < i_{j+1}$ and each $l_{i_j}$ is either $X_{i_j}$ or $\lnot X_{i_j}$.
$\top$ abbreviates the ``empty intervention'' formula with $n = 0$.
	Let $\mathcal{L}_{\mathrm{cond}}$ be the conditional language of formulas
of the form $\langle\alpha\rangle \beta$ for $\alpha \in \mathcal{L}_{\mathrm{int}}, \beta \in \mathcal{L}_{\mathrm{prop}}$. 
	
	The overall basic language $\mathcal{L}_{\nonprob}$ is the language 
	formed by closing off the formulas of $\mathcal{L}_{\mathrm{cond}}$
\footnote{Unlike \cite{ibeling}, we do not
admit the basic atoms $X$ as atoms of $\mathcal{L}$. There is no difficulty extending the semantics
to such atoms, but allowing them would needlessly complicate the proof of Theorem \ref{soundcomplete}.
}
under conjunction, disjunction, and negation.
\end{definition}	
Every formula $\alpha \in \mathcal{L}_{\mathrm{int}}$ specifies an intervention
$\mathcal{I}_{\alpha}$
by giving a list
of variables to fix and which values they are to be fixed to.
Given a \emph{subjunctive conditional} formula $\langle \alpha \rangle \beta \in \mathcal{L}_{\mathrm{cond}}$, we call $\alpha$
the \emph{antecedent} and $\beta$ the \emph{consequent}.
We use $[\alpha]$ for the dual of $\langle\alpha\rangle$,
i.e., $[ \alpha ]\beta$ abbreviates $\lnot \langle\alpha\rangle (\lnot \beta)$.
Note that $\langle\rangle \varphi$ holds in a program if the unmodified program halts with a tape making $\varphi$ true.

\subsubsection{Semantics}
The semantics of the basic language are defined from considering a subjunctive conditional to be true
in a simulation model
when
the program so intervened upon as to make its antecedent hold halts with such values of the tape variables as make
its consequent hold.
For example, consider a simple model that checks if the first memory tape square $X_0$ is $1$ and if so writes a $1$ into the second tape
square $X_1$, and otherwise simply halts. This program satisfies the formulas $\langle\rangle \lnot X_0$, $\langle\rangle \lnot X_1$,
but also the counterfactual formula $\langle X_0 \rangle (X_0 \land X_1)$: holding the first memory square fixed to $1$ causes a write of the value
$1$ into the second tape square, thus satisfying the consequent $X_0 \land X_1$.
Formally:
\begin{definition}
	\label{semanticsofLnonprob}
	Let $\mathsf{T}$ be a non-probabilistic simulation model. Define $\mathsf{T} \models_{\nonprob}
	\langle\alpha\rangle \beta$ iff $\mathcal{I}_{\alpha}(\mathsf{T})$ halts with a memory tape whose variable
	assignment satisfies $\beta$.
	Now suppose $\mathsf{T}$ is probabilistic, and fix values for all squares on the random bit tape to some sequence $\mathbf{r} \in \{0, 1\}^\infty$.
	Define $\mathsf{T}, \mathbf{r} \models \langle \alpha \rangle \beta$ iff $\mathcal{I}_{\alpha}(\mathsf{T})$ when run with its random bit tape fixed to $\mathbf{r}$ halts
	with a resultant memory tape satisfying $\beta$.
	Define (in both cases) satisfaction of arbitrary formulas of $\mathcal{L}_{\nonprob}$ in the familiar way by recursion.
\end{definition}
In a sense, the validities of the non-probabilistic setting carry over
to this setting, as we will now show.
For $\varphi \in \mathcal{L}_{\nonprob}$, write $\models_{\nonprob} \varphi$
if $\varphi$ is valid in the class of all non-probabilistic simulation models.
We will see that all such formulas are still valid for probabilistic simulation models, under Definition \ref{semanticsofLnonprob},
once one fixes the random bit tape to a particular sequence.
\begin{lemma}
\label{validitieslemma}
	$\models_{\nonprob} \varphi$ if and only if, for all probabilistic simulation models
$\mathsf{T}$ and all $\mathbf{r} \in \{0, 1\}^{\infty}$, we have
that $\mathsf{T}, \mathbf{r} \models \varphi$.
\end{lemma}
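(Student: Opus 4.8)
The plan is to prove the two directions of the biconditional separately. The reverse direction is essentially immediate: arguing contrapositively, if $\not\models_{\nonprob}\varphi$ then some non-probabilistic model $\mathsf{T}_0$ has $\mathsf{T}_0\not\models_{\nonprob}\varphi$. A deterministic Turing machine is a probabilistic Turing machine that simply never consults its random bit tape, so $\mathsf{T}_0$ may be regarded as a probabilistic simulation model whose behavior under Definition~\ref{semanticsofLnonprob} is the same for every $\mathbf{r}$; hence $\mathsf{T}_0,\mathbf{r}\not\models\varphi$ for all $\mathbf{r}$, and the right-hand side fails.

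For the forward direction, fix a probabilistic model $\mathsf{T}$ and a sequence $\mathbf{r}\in\{0,1\}^\infty$. I want to manufacture a non-probabilistic model $\mathsf{T}'$ agreeing with the pair $(\mathsf{T},\mathbf{r})$ on the truth of $\varphi$; then $\models_{\nonprob}\varphi$ forces $\mathsf{T}'\models_{\nonprob}\varphi$, whence $\mathsf{T},\mathbf{r}\models\varphi$. Since $\varphi$ is a finite Boolean combination of conditionals, only finitely many antecedents $\alpha_1,\dots,\alpha_k$ occur in it. For each $j$ for which $\mathcal{I}_{\alpha_j}(\mathsf{T})$ run on $\mathbf{r}$ halts, that computation consults only finitely many random squares; let $N$ bound all indices so consulted across these finitely many halting runs. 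Define $\mathsf{T}'$ to simulate $\mathsf{T}$, replacing each read of random square $i$ by the hardcoded bit $\mathbf{r}[i]$ when $i\le N$ and by entry into a non-halting loop when $i>N$. Because it hardcodes only the finite prefix $\mathbf{r}[0],\dots,\mathbf{r}[N]$, this $\mathsf{T}'$ is a genuine deterministic machine.

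It then remains to verify by recursion on $\varphi$ that $\mathsf{T}'\models_{\nonprob}\varphi$ iff $\mathsf{T},\mathbf{r}\models\varphi$; since interventions act only on memory squares and commute with the bit-hardcoding, it suffices to check agreement on each conditional subformula $\langle\alpha\rangle\beta$ of $\varphi$. If $\mathcal{I}_{\alpha}(\mathsf{T})$ halts on $\mathbf{r}$ it reads only squares $\le N$, so $\mathsf{T}'$ reproduces the identical computation and resultant tape and returns the same verdict on $\beta$. If it fails to halt, then $\langle\alpha\rangle\beta$ is false under $(\mathsf{T},\mathbf{r})$, and $\mathcal{I}_{\alpha}(\mathsf{T}')$ also fails to halt—it either mirrors the divergent run without reading past square $N$, or reads some square $>N$ and diverges by construction—so $\langle\alpha\rangle\beta$ is false under $\mathsf{T}'$ as well. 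Agreement on the conditional atoms then propagates through the Boolean connectives.

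The one genuine obstacle is that $\mathbf{r}$ may be uncomputable, so $(\mathsf{T},\mathbf{r})$ is not in general equivalent to any deterministic machine. The observation that dissolves this is that satisfaction of the fixed, finite formula $\varphi$ depends on the random tape only through the finitely many bits consumed by its halting antecedent-runs: hardcoding exactly that finite prefix keeps $\mathsf{T}'$ finite, while diverging on any further read preserves every non-halting verdict—precisely what the recursion requires.
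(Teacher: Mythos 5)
Your proof is correct, and its overall strategy is the same as the paper's: the reverse direction embeds a deterministic machine as a probabilistic one that never consults its random tape, and the forward direction hardcodes a finite prefix of $\mathbf{r}$ into a deterministic machine $\mathsf{T}'$ that agrees with $(\mathsf{T},\mathbf{r})$ on every conditional atom of $\varphi$. Where you genuinely depart from the paper is in the treatment of antecedent-runs that do not halt on $\mathbf{r}$. The paper handles the atoms $\langle\alpha\rangle\beta$ whose intervened run reads unboundedly many random bits by inserting code into $\mathsf{T}'$ that \emph{detects} whether the machine is currently running under the intervention $\mathcal{I}_{\alpha}$ (via the toggle trick borrowed from the non-probabilistic completeness construction) and, if so, enters an infinite loop. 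You instead make $\mathsf{T}'$ diverge uniformly on any attempted read of a random square of index greater than $N$, where $N$ bounds the squares consulted by the \emph{halting} antecedent-runs. This works: a halting run of $\mathcal{I}_{\alpha}(\mathsf{T})$ on $\mathbf{r}$ reads only squares $\le N$ and is reproduced verbatim (tape contents included), while a divergent run either is mirrored exactly (if it never reads past $N$) or hits your forced loop, so in either case the atom remains false; and conversely a halting run of $\mathcal{I}_{\alpha}(\mathsf{T}')$ never triggers the loop and hence mirrors a halting run of $\mathcal{I}_{\alpha}(\mathsf{T})$ on $\mathbf{r}$. Your variant buys a simpler construction---no intervention-detection gadget is needed, and all interventions are handled uniformly rather than case-by-case over the antecedents occurring in $\varphi$. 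The only thing to keep in mind is that the paper later reuses this construction in the almost-surely-halting setting, where it matters that $\mathsf{T}'$ can be taken loop-free; your version plants a syntactic loop guarding reads past $N$, so that adaptation would need a word of justification, but for the lemma as stated your argument is complete.
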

\begin{proof}
Suppose $\models_{\nonprob} \varphi$. Consider some probabilistic simulation model $\mathsf{T}$ and sequence
$\mathbf{r} \in \{0, 1\}^{\infty}$. $\varphi$ is composed of $\mathcal{L}_{\mathrm{cond}}$-atoms, of the form $\langle \alpha \rangle \beta$.
What is the behavior of $\mathcal{I}_{\alpha}(\mathsf{T}), \mathbf{r}$?
Either $\mathcal{I}_{\alpha}(\mathsf{T}), \mathbf{r}$ reads only a finite portion of $\mathbf{r}$ or reads an
unbounded portion of $\mathbf{r}$ (in the latter case, it also does not halt).
If only a finite portion is read, let
$N(a)$ be the maximal random bit tape square reached of $\mathbf{r}$. Let $N$ be the maximum
of the $N(a)$ for all atoms $a$ in $\varphi$, clearly existent as $\varphi$ has finite length.
Construct a Turing machine $\mathsf{T}'$ from $\mathsf{T}$ that embeds the contents of $\mathbf{r}$ up to index $N$
into its code, replacing any read from $\mathbf{r}$ with its value. This is possible in a finite amount of code
as we only
have to include values up to $N$ in $\mathsf{T}'$.

What if $\mathcal{I}_{\alpha}(\mathsf{T}), \mathbf{r}$ ends up reading an unbounded portion of $\mathbf{r}$?
We note that it is possible to write code in $\mathsf{T}'$ to check if the
machine is being run under an $\alpha$-fixing
intervention---i.e., conditional code that runs under $\mathcal{I}_{\alpha}(\mathsf{T}')$
and no other intervention.\footnote{
For the precise details of this construction, see \cite{ibeling}. Briefly, if one wants to check if
some $X_i$ is being held fixed by an intervention, one can try to toggle $X_i$; this attempt
will be successful iff $X_i$ is not currently being fixed by an intervention.
}
Add such code to $\mathsf{T}'$, including an infinite loop conditional on an $\alpha$-intervention
for each case where $\mathcal{I}_{\alpha}(\mathsf{T}), \mathbf{r}$ reads an unbounded portion of $\mathbf{r}$.
Now, for all atoms $\langle \alpha\rangle \beta$, $\mathsf{T}' \models_{\nonprob} \langle \alpha\rangle \beta$ iff $\mathsf{T}, \mathbf{r} \models \langle \alpha\rangle \beta$. 
As this
holds for any atom of $\varphi$, and $\models_{\nonprob} \varphi$, we have that $\mathsf{T}, \mathbf{r} \models \varphi$ as desired.

Now, suppose that $\mathsf{T}, \mathbf{r} \models \varphi$ for all probabilistic $\mathsf{T}, \mathbf{r}$. We want to see that $\models_{\nonprob} \varphi$. Given a non-probabilistic $\mathsf{T}$, convert $\mathsf{T}$ to a probabilistic TM $\mathsf{T}'$ that never reads from its random tape, and take any random tape $\mathbf{r}$. Then $\mathsf{T}', \mathbf{r} \models \varphi$ so that $\mathsf{T} \models_{\nonprob} \varphi$.
\qed
\end{proof}

\subsection{Adding Probabilities}
\subsubsection{Syntax}
$\mathcal{L}$ is the language of linear inequalities over probabilities that
formulas of $\mathcal{L}_{\nonprob}$ hold. 
More precisely:
\begin{definition}
	Let $\mathcal{L}_{\mathrm{ineq}}$ be the language of formulas of the form
	\begin{equation}
\label{atomicweightformula}
		a_1 \mathbb{P}(\varphi_1) + \dots + a_n \mathbb{P}(\varphi_n) \le c
\end{equation}
	for some $n \in \mathbb{N}$, and $c, a_1, \dots, a_n \in \mathbb{Z}$, $\varphi_1, \dots, \varphi_n \in \mathcal{L}_{\nonprob}$.
	Then $\mathcal{L}$ is the language of propositional formulas formed by closing off
	$\mathcal{L}_{\mathrm{ineq}}$
	under conjunction, disjunction, and negation.
\end{definition}
We
sometimes write inequalities of a different form from (\ref{atomicweightformula}) with the understanding 
that they can be readily converted into some $\mathcal{L}$-formula.
For example, an inequality with a $>$ sign is a negation of a $\mathcal{L}_{\mathrm{ineq}}$-formula.

\subsubsection{Semantics}
Let $\mathsf{T}$ be a probabilistic simulation model.
We will shortly define a probability $\mathbb{P}_{\mathsf{T}} : \mathcal{L}_{\nonprob} \to [0, 1]$.
Now suppose a  
given $\varphi \in \mathcal{L}_{\mathrm{ineq}}$ 
has the form (\ref{atomicweightformula}).
Then $\mathsf{T} \models \varphi$
iff the inequality (\ref{atomicweightformula}) holds when each $\mathbb{P}(\varphi_i)$
factor takes the value $\mathbb{P}_{\mathsf{T}}(\varphi_i)$.
Satisfaction $\mathsf{T} \models \varphi$ for arbitrary $\varphi \in \mathcal{L}$
is then defined familiarly by recursion.
Given $\varphi \in \mathcal{L}_{\nonprob}$, the probability $\mathbb{P}_{\mathsf{T}}(\varphi)$
is simply the (standard) measure of the set of 
infinite bit sequences $\mathbf{r}$ for which $\mathsf{T}, \mathbf{r} \models \varphi$.
More formally: let $\Sigma$ be the $\sigma$-algebra on $\{0, 1\}^\infty$ generated by cylinder sets and 
	$\mu$ be the standard measure defined on $\Sigma$.\footnote{
	That is, as the product measure of $\mathrm{Bernoulli}(1/2)$ measures, as defined in, e.g., \cite{DBLP:books/cu/p/FreerRT14}.
	}
Now let $S(\varphi) = \{\mathbf{r} \in \{0, 1\}^\infty : \mathsf{T}, \mathbf{r} \models \varphi\}$. Then we define $\mathbb{P}_T(\varphi) = \mu(S(\varphi))$. The following Lemma
ensures that $S(\varphi)$ is always measurable, so that this definition is valid.

\begin{lemma}
	For any $\varphi \in \mathcal{L}_{\nonprob}$, we have $S(\varphi) \in \Sigma$.
\end{lemma}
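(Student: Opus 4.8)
The plan is to argue by structural induction on $\varphi \in \mathcal{L}_{\nonprob}$, letting the Boolean connectives be absorbed by the closure properties of the $\sigma$-algebra $\Sigma$ and concentrating the real work on the base case of a single conditional atom $\langle \alpha \rangle \beta$.

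For the inductive step I would use that satisfaction $\mathsf{T}, \mathbf{r} \models \varphi$ was defined by recursion over the connectives, so that the set operations mirror the syntax: $S(\lnot \psi) = \{0,1\}^\infty \setminus S(\psi)$, $S(\psi_1 \land \psi_2) = S(\psi_1) \cap S(\psi_2)$, and $S(\psi_1 \lor \psi_2) = S(\psi_1) \cup S(\psi_2)$. Since $\Sigma$ is a $\sigma$-algebra it is closed under complement, finite intersection, and finite union, so measurability of the sets attached to the immediate subformulas yields measurability of the set attached to $\varphi$.

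The base case is where I expect the main difficulty, and I would handle it as follows. Fix a conditional atom $\langle \alpha \rangle \beta$ and write $\mathsf{M} = \mathcal{I}_{\alpha}(\mathsf{T})$, which is itself a probabilistic Turing machine because $\mathcal{I}_{\alpha}$ is a computable machine-to-machine transformation. For each $t \ge 0$ let $H_t$ be the set of $\mathbf{r}$ such that $\mathsf{M}$, run with random bit tape $\mathbf{r}$, halts within $t$ steps with a resultant memory tape whose variable assignment satisfies $\beta$. The key observation is that in $t$ steps $\mathsf{M}$ visits at most finitely many squares of $\mathbf{r}$, so membership in $H_t$ depends only on those finitely many coordinates: if $\mathbf{r} \in H_t$ and $\mathbf{r}'$ agrees with $\mathbf{r}$ on the finitely many coordinates actually read during that halting computation, then $\mathbf{r}'$ induces an identical computation and hence $\mathbf{r}' \in H_t$. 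Here I would also use that $\beta \in \mathcal{L}_{\mathrm{prop}}$ mentions only finitely many atoms, so ``the halting tape satisfies $\beta$'' is a condition on finitely many tape squares and is fixed by that same finite computation. Since at most finitely many partial read-patterns can arise among computations halting within $t$ steps, $H_t$ is a finite union of cylinder sets, and therefore $H_t \in \Sigma$.

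To conclude, I would note that $\mathsf{T}, \mathbf{r} \models \langle \alpha \rangle \beta$ holds precisely when $\mathsf{M}$ halts on $\mathbf{r}$ after some finite number of steps with a tape satisfying $\beta$, so that $S(\langle \alpha \rangle \beta) = \bigcup_{t \ge 0} H_t$. This exhibits $S(\langle \alpha \rangle \beta)$ as a countable union of members of $\Sigma$, hence a member of $\Sigma$, which settles the base case and completes the induction. The one point to state carefully is the finite-dependence claim for $H_t$, since it relies jointly on the bounded number of random-tape squares consulted in a bounded computation and on the finiteness of the variable set appearing in $\beta$; everything else is routine manipulation within the $\sigma$-algebra.
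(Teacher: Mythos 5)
Your proposal is correct and follows essentially the same route as the paper: structural induction with the Boolean cases absorbed by the closure properties of $\Sigma$, and the base case resting on the observation that a halting run of $\mathcal{I}_{\alpha}(\mathsf{T})$ reads only finitely many random bits, so that $S(\langle\alpha\rangle\beta)$ is a countable union of cylinder sets. Your time-stratified decomposition into the sets $H_t$ is just a slightly more explicit way of organizing the same countable union the paper writes down directly.
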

\begin{proof}
Proof by induction on the structure of $\varphi$.
	If $\varphi = \lnot \psi$, then $S({\varphi})$ is the complement of a set in $\Sigma$
and hence is in $\Sigma$.
The case of a conjunction or disjunction is similar since $\Sigma$ is closed under intersection
and union.
The base case is that of the atoms.
	Consider an atom of the form $\langle \alpha \rangle \beta$. If $\mathcal{I}_{\alpha}(\mathsf{T})$ halts on $\mathbf{x}$ with random bit tape fixed to $\mathbf{r}$, then it does so reading only a finite portion of $\mathbf{r}$.
	Thus $S({\langle \alpha \rangle \beta})$ is the  union of
cylinder sets extending finite strings on which $\mathcal{I}_{\alpha}(\mathsf{T})$ halts with a result satisfying $\beta$, and hence is in $\Sigma$.
\qed
\end{proof}
This probability is \emph{coherent} in the sense that it plays well with the logic of the basic language:
\begin{proposition}
\label{coherence}
For any probabilistic $\mathsf{T}$ we have,
\begin{enumerate}
	\item $\mathbb{P}_{\mathsf{T}}(\varphi) = 1$ if $\models_{\nonprob} \varphi$ for $\varphi \in \mathcal{L}_{\nonprob}$
	\item $\mathbb{P}_{\mathsf{T}}(\varphi) \le \mathbb{P}_{\mathsf{T}}(\psi)$ whenever $\models_{\nonprob} \varphi \rightarrow \psi$ for $\varphi, \psi \in \mathcal{L}_{\nonprob}$
\item $\mathbb{P}_{\mathsf{T}}(\varphi) = \mathbb{P}_{\mathsf{T}}(\varphi \land \psi) + \mathbb{P}_{\mathsf{T}}(\varphi \land \lnot \psi)$ for all $\varphi, \psi \in \mathcal{L}_{\nonprob}$
\end{enumerate}
\end{proposition}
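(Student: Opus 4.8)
The plan is to reduce all three claims to elementary measure-theoretic facts about the sets $S(\varphi)$, using Lemma \ref{validitieslemma} as the bridge from non-probabilistic validity to pointwise satisfaction. The first thing I would record is that the recursive (``familiar'') clauses defining $\mathsf{T}, \mathbf{r} \models \varphi$ make the assignment $\varphi \mapsto S(\varphi)$ a homomorphism from the Boolean structure of $\mathcal{L}_{\nonprob}$ into the Boolean algebra of subsets of $\{0,1\}^\infty$. Explicitly, for all $\varphi, \psi \in \mathcal{L}_{\nonprob}$ one has $S(\lnot \varphi) = \{0,1\}^\infty \setminus S(\varphi)$, $S(\varphi \land \psi) = S(\varphi) \cap S(\psi)$, and $S(\varphi \lor \psi) = S(\varphi) \cup S(\psi)$; each is immediate by unwinding the definition of satisfaction at a fixed $\mathbf{r}$. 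Measurability of all these sets is already guaranteed by the preceding Lemma.

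For part (1), I would argue that $\models_{\nonprob} \varphi$ gives, via Lemma \ref{validitieslemma}, that $\mathsf{T}, \mathbf{r} \models \varphi$ for every $\mathbf{r} \in \{0,1\}^\infty$; hence $S(\varphi) = \{0,1\}^\infty$ and $\mathbb{P}_{\mathsf{T}}(\varphi) = \mu(\{0,1\}^\infty) = 1$. For part (2), $\models_{\nonprob} \varphi \rightarrow \psi$ yields, again by the Lemma applied to the $\mathcal{L}_{\nonprob}$-formula $\lnot \varphi \lor \psi$, that at every $\mathbf{r}$ satisfaction of $\varphi$ forces satisfaction of $\psi$; this is exactly the inclusion $S(\varphi) \subseteq S(\psi)$, whence $\mathbb{P}_{\mathsf{T}}(\varphi) = \mu(S(\varphi)) \le \mu(S(\psi)) = \mathbb{P}_{\mathsf{T}}(\psi)$ by monotonicity of $\mu$. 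For part (3), the homomorphism property gives $S(\varphi \land \psi) = S(\varphi) \cap S(\psi)$ and $S(\varphi \land \lnot \psi) = S(\varphi) \setminus S(\psi)$, which partition $S(\varphi)$ into two disjoint measurable pieces; finite additivity of $\mu$ then delivers the identity directly.

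The step demanding the most care---though it is hardly an obstacle---is the correct invocation of Lemma \ref{validitieslemma} in parts (1) and (2): the Lemma is precisely what licenses passing from ``$\varphi$ is valid over all non-probabilistic models'' to ``$\mathsf{T}, \mathbf{r} \models \varphi$ holds for this one fixed probabilistic $\mathsf{T}$ at every $\mathbf{r}$,'' and thus to the set-level statements $S(\varphi) = \{0,1\}^\infty$ and $S(\varphi) \subseteq S(\psi)$. Everything else is routine Boolean bookkeeping together with monotonicity and finite additivity of the standard measure $\mu$, both of which are available once measurability is in hand.
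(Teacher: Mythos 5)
Your proposal is correct and follows essentially the same route as the paper: part (1) and part (2) are obtained exactly as in the paper by invoking Lemma \ref{validitieslemma} to pass to the set-level facts $S(\varphi)=\{0,1\}^\infty$ and $S(\varphi)\subseteq S(\psi)$, and part (3) amounts to the same partition of $S(\varphi)$ into the disjoint measurable pieces $S(\varphi\land\psi)$ and $S(\varphi\land\lnot\psi)$ followed by finite additivity (the paper phrases this via the validity $\models_{\nonprob}\varphi\leftrightarrow((\varphi\land\psi)\lor(\varphi\land\lnot\psi))$ and an appeal to part (2), whereas you compute the sets directly from the Boolean-homomorphism property of $S$, but the content is identical).
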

\begin{proof}
(1) holds since in this case, by Lemma \ref{validitieslemma}, $S({\varphi}) = \{0, 1\}^{\infty}$. (2) holds since in this case, $S({\varphi}) \subseteq S({\psi})$. Finally (3) holds by noting $\models_{\nonprob} \varphi \leftrightarrow ((\varphi \land \psi) \lor (\varphi \land \lnot \psi))$, applying (2), and noting that $S({\varphi \land \psi})$ and $S({\varphi \land \lnot \psi})$ are disjoint.
\qed
\end{proof}
A corollary of part (2) is that logical equivalents under $\models_{\nonprob}$ preserve probability.

\subsection{The Case of Almost-Surely Halting Simulations}
An interesting special case
is that of the simulation models that halt almost-surely, i.e., with probability $1$ under
every intervention. Call this class $\mathcal{M}^{\downarrow}$.
Following
the urging of \cite{icard2017beyond} we have not restricted the definition of probabilistic simulation model
to such models. We will see that from a logical point of view,
this case is a natural probabilistic analogue of the class $\mathcal{M}^{\downarrow}_{\nonprob}$ of non-probabilistic
simulation 
models that halt under every intervention.
By this we mean that we may prove an analogue to Lemma \ref{validitieslemma}.
Write $\models_{\nonprob}^{\downarrow} \varphi$ if
$\varphi \in \mathcal{L}_{\nonprob}$ is valid in $\mathcal{M}^{\downarrow}_{\nonprob}$.
Note that
Lemma \ref{validitieslemma} does \emph{not} hold if one merely changes all the preconditions to be
halting/almost-surely halting:
consider
a probabilistic simulation model $\mathsf{T}$
that repeatedly reads random bits and halts at the first
$1$ it discovers; this program is almost-surely halting.
But if $\mathbf{r}$ is an infinite
sequence of $0$s, then $\mathsf{T}, \mathbf{r} \not\models \langle\rangle \top$,
even though $\models_{\nonprob}^{\downarrow} \langle\rangle \top$.
Crucially, we must move to the perspective of probability and measure
to see the analogy:
\begin{lemma}
\label{validitieslemmaashalting}
$\models_{\nonprob}^{\downarrow} \varphi$ if and only if, for all $\mathsf{T} \in \mathcal{M}^{\downarrow}$,
we have $\mathsf{T}, \mathbf{r} \models \varphi$ for all $\mathbf{r} \in \{0, 1\}^\infty$ except on a set of measure
$0$.
\end{lemma}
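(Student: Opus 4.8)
The plan is to mirror the two-direction argument of Lemma~\ref{validitieslemma}, but with the class of all machines replaced by the almost-surely-halting class $\mathcal{M}^{\downarrow}$ and ``for all $\mathbf{r}$'' weakened to ``for all $\mathbf{r}$ outside a null set.'' The backward direction is the easy one. Given an arbitrary non-probabilistic $\mathsf{T} \in \mathcal{M}^{\downarrow}_{\nonprob}$, I would form the probabilistic machine $\mathsf{T}'$ that simulates $\mathsf{T}$ but never consults its random tape; since $\mathsf{T}$ halts under every intervention, $\mathsf{T}'$ halts with probability $1$ under every intervention, so $\mathsf{T}' \in \mathcal{M}^{\downarrow}$. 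The hypothesis then gives $\mathsf{T}', \mathbf{r} \models \varphi$ for almost every $\mathbf{r}$; but the truth value of $\varphi$ under $\mathsf{T}'$ does not depend on $\mathbf{r}$, so it holds for every $\mathbf{r}$, whence $\mathsf{T} \models_{\nonprob} \varphi$. As $\mathsf{T}$ was arbitrary, $\models_{\nonprob}^{\downarrow} \varphi$.

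For the forward direction, fix $\mathsf{T} \in \mathcal{M}^{\downarrow}$. The key preliminary observation---and what I expect to be the crux---is the following measure-theoretic fact: under any intervention $\gamma$, if $\mathcal{I}_{\gamma}(\mathsf{T}), \mathbf{r}$ reads only a finite portion of $\mathbf{r}$, then it halts. Indeed, if such a run looped forever while reading only the first $k$ bits of $\mathbf{r}$, then every $\mathbf{r}'$ agreeing with $\mathbf{r}$ on its first $k$ bits would produce the identical non-halting run, exhibiting a cylinder set of positive measure on which $\mathcal{I}_{\gamma}(\mathsf{T})$ fails to halt---contradicting $\mathsf{T} \in \mathcal{M}^{\downarrow}$. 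Equivalently, every non-halting run of $\mathcal{I}_{\gamma}(\mathsf{T})$ reads an unbounded portion of $\mathbf{r}$.

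With this in hand, let $\alpha_1, \dots, \alpha_m$ enumerate the antecedents of the $\mathcal{L}_{\mathrm{cond}}$-atoms of $\varphi$. For each $j$, almost-sure halting gives a measure-$1$ set $H_j$ of sequences on which $\mathcal{I}_{\alpha_j}(\mathsf{T})$ halts; let $H = \bigcap_j H_j$, still of measure $1$. I claim $\mathsf{T}, \mathbf{r} \models \varphi$ for every $\mathbf{r} \in H$, which suffices. Fix such an $\mathbf{r}$. Since each $\mathcal{I}_{\alpha_j}(\mathsf{T}), \mathbf{r}$ halts, it reads only finitely many squares of $\mathbf{r}$; let $N$ bound all the squares so reached, exactly as in Lemma~\ref{validitieslemma}. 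I would then build the deterministic machine $\mathsf{T}'$ that simulates $\mathsf{T}$ with the bits $\mathbf{r}_0, \dots, \mathbf{r}_N$ hard-coded in place of its random reads, and that halts immediately should the simulated random head ever try to advance past square $N$.

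It remains to check that this $\mathsf{T}'$ does the job. On the one hand, $\mathsf{T}' \in \mathcal{M}^{\downarrow}_{\nonprob}$: under any intervention $\gamma$, either the simulated head tries to pass square $N$---in which case $\mathsf{T}'$ halts by construction---or it stays within $N$, in which case $\mathcal{I}_{\gamma}(\mathsf{T}), \mathbf{r}$ reads only a finite portion and so halts by the crux observation, whence $\mathsf{T}'$ faithfully reproduces that halting computation. On the other hand, for each $\varphi$-antecedent $\alpha_j$ the head stays within $N$ by the choice of $N$, so $\mathsf{T}'$ reproduces $\mathcal{I}_{\alpha_j}(\mathsf{T}), \mathbf{r}$ exactly, and hence $\mathsf{T}' \models_{\nonprob} \langle \alpha_j \rangle \beta$ iff $\mathsf{T}, \mathbf{r} \models \langle \alpha_j \rangle \beta$ for every atom of $\varphi$. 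Since the atoms agree and $\models_{\nonprob}^{\downarrow} \varphi$ forces $\mathsf{T}' \models_{\nonprob} \varphi$, a recursion on the structure of $\varphi$ yields $\mathsf{T}, \mathbf{r} \models \varphi$, completing the argument.
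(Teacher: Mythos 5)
Your proof is correct and follows essentially the same route as the paper's: the backward direction converts a non-probabilistic machine into one that ignores its random tape, and the forward direction uses the key observation that almost-sure halting forces every non-halting run to read unboundedly many random bits, so that for almost every $\mathbf{r}$ one can hard-code a finite prefix of $\mathbf{r}$ and reduce to an always-halting non-probabilistic machine. Your explicit ``halt if the simulated random head passes square $N$'' clause is a slightly more careful treatment of why the resulting machine lies in $\mathcal{M}^{\downarrow}_{\nonprob}$ under \emph{every} intervention (not just the antecedents of $\varphi$), a point the paper's proof passes over more quickly.
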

\begin{proof}
Suppose $\models_{\nonprob}^{\downarrow} \varphi$. We claim
that for all $\mathsf{T} \in \mathcal{M}^{\downarrow}$
we have $\mathsf{T}, \mathbf{r} \models \varphi$ for all $\mathbf{r}$ except
on a set of measure $0$.
Again,
consider
an atom $\langle\alpha\rangle\beta$ appearing in $\varphi$.
The set of $\mathbf{r}$ for which
$\mathcal{I}_{\alpha}(\mathsf{T}), \mathbf{r}$ does not halt
has measure $0$, given that $\mathsf{T} \in \mathcal{M}^{\downarrow}$.
On each such $\mathbf{r}$, the run of $\mathcal{I}_{\alpha}(\mathsf{T}), \mathbf{r}$ must read infinitely many bits of $\mathbf{r}$:
otherwise, the intervened machine would have a nonzero probability of not halting.
Thus, excluding such $\mathbf{r}$, it is possible to repeat the construction of $\mathsf{T}'$
from the proof of Lemma \ref{validitieslemma} for $\langle\alpha\rangle\beta$, and
in doing this construction we are already ignoring
all cases where an unbounded portion of $\mathbf{r}$ is read. 
This means that we do not have to include any infinite loops in $\mathsf{T}'$,
and $\mathsf{T}'$ will be always-halting. If we exclude all the such $\mathbf{r}$ arising
from all antecedents of atoms
of $\varphi$, then we only exclude a set of measure $0$ since there are finitely many atoms.
Except for such $\mathbf{r}$, the construction works, and
$\mathsf{T}'$ has, as before, the same behavior as $\mathsf{T}$. But since $\models_{\nonprob}^{\downarrow} \varphi$,
we have that
$\mathsf{T}, \mathbf{r} \models \varphi$ except on the excluded set of measure $0$.

For the opposite direction, let $\mathsf{T} \in \mathcal{M}^{\downarrow}_{\nonprob}$.
We wish to show that $\mathsf{T} \models_{\nonprob} \varphi$.
Convert $\mathsf{T}$ to an identical probabilistic simulation program $\mathsf{T}'$ that never
reads from its random tape. We have $\mathsf{T}', \mathbf{r} \models \varphi$
for all $\mathbf{r}$ but on a set of measure $0$; in particular, for at least one $\mathbf{r}$.
This implies $\mathsf{T} \models_{\nonprob}\varphi$.
\qed
\end{proof}

\section{Axiomatic Systems}
We will now give an axiomatic system for
reasoning in $\mathcal{L}$ and prove
that it is \emph{sound} and \emph{complete} with respect to
probabilistic simulation models: it proves
all (completeness) and only (soundness) the formulas of
$\mathcal{L}$ that hold for all probabilistic simulation models.
We will give an additional system that is sound and complete
for validities with respect to the almost-surely halting simulation models $\mathcal{M}^{\downarrow}$.
\begin{definition}
Let $\textsf{AX}$ be a
set of rules and axioms formed by combining the following three modules.
\begin{enumerate}
\item $\textsf{PC}$: propositional reasoning (tautologies and \emph{modus ponens})
over atoms of $\mathcal{L}$.
\item $\textsf{Prob}$: the following axioms:
\begin{eqnarray*}
	\textsf{NonNeg}. && \mathbb{P}(\varphi) \ge 0 \\
	\textsf{Norm}.&& \mathbb{P}(\top) = 1 \\
 \textsf{Add}. && \mathbb{P}(\varphi \land \psi) + \mathbb{P}(\varphi \land \lnot \psi) = \mathbb{P}(\varphi) \\
	\textsf{Dist}. && \mathbb{P}(\varphi) = \mathbb{P}(\psi) \mbox{ whenever } \models_{\nonprob} \varphi \leftrightarrow \psi \\
\end{eqnarray*} 
\item $\textsf{Ineq}$, an axiomatization (see \cite{Fagin}) for
reasoning about
linear inequalities:
\begin{eqnarray*}
	\textsf{Zero}. && 
	(a_1 \mathbb{P}(\varphi_1) + \dots + a_n \mathbb{P}(\varphi_n) \le c)
	\\ && \Leftrightarrow
	(a_1 \mathbb{P}(\varphi_1) + \dots + a_n \mathbb{P}(\varphi_n) + 0 \mathbb{P}(\varphi_{n+1}) \le c)\\
	\textsf{Permutation}. && 
	(a_1 \mathbb{P}(\varphi_1) + \dots + a_n \mathbb{P}(\varphi_n) \le c)
	\Leftrightarrow
	(a_{j_1} \mathbb{P}(\varphi_{j_1}) + \dots + a_{j_n} \mathbb{P}(\varphi_{j_n}) \le c)\\
	&& \mbox{ when } j_1, \dots, j_n \mbox{ are a permutation of } 1, \dots, n\\
	\textsf{AddIneq}. &&
	(a_1 \mathbb{P}(\varphi_1) + \dots + a_n \mathbb{P}(\varphi_n) \le c) \land
	(a'_1 \mathbb{P}(\varphi_1) + \dots + a'_n \mathbb{P}(\varphi_n) \le c')\\
	&& \Rightarrow
	((a_1 + a'_1) \mathbb{P}(\varphi_1) + \dots + (a_n + a'_n) \mathbb{P}(\varphi_n) \le (c + c')) \\
	\textsf{Mult}. &&
	(a_1 \mathbb{P}(\varphi_1) + \dots + a_n \mathbb{P}(\varphi_n) \le c)\\
	&& \Rightarrow
	(b a_1 \mathbb{P}(\varphi_1) + \dots + b a_n \mathbb{P}(\varphi_n) \le b c)
	\mbox{ for any } b > 0 \\
	\textsf{Dichotomy}. && (a_1 \mathbb{P}(\varphi_1) + \dots + a_n \mathbb{P}(\varphi_n) \le c) \lor (a_1 \mathbb{P}(\varphi_1) + \dots + a_n \mathbb{P}(\varphi_n) \ge c)\\
	\textsf{Mono}. && (a_1 \mathbb{P}(\varphi_1) + \dots + a_n \mathbb{P}(\varphi_n) \le c) \\
	&& \Rightarrow (a_1 \mathbb{P}(\varphi_1) + \dots + a_n \mathbb{P}(\varphi_n) < b) \mbox{ if } b > c
\end{eqnarray*}

\end{enumerate}
Additionally, let $\textsf{AX}^{\downarrow}$ be the system formed
in exactly the same way, but
	replacing $\models_{\nonprob}$ with $\models_{\nonprob}^{\downarrow}$. 
\end{definition}
Note that the non-probabilistic validities $\models_{\nonprob}$ and $\models_{\nonprob}^{\downarrow}$, appearing in $\textsf{Dist}$,
have been completely axiomatized in \cite{ibeling}.
The main result is:
\begin{theorem}
\label{soundcomplete}
	$\textsf{AX}$ (respectively, $\textsf{AX}^{\downarrow}$) is sound and complete for the
	validities of $\mathcal{L}$ with respect to $\mathcal{M}$ (respectively, $\mathcal{M}^{\downarrow}$).
\end{theorem}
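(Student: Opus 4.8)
The plan is to treat soundness and completeness separately, handling $\textsf{AX}/\mathcal{M}$ and $\textsf{AX}^{\downarrow}/\mathcal{M}^{\downarrow}$ in parallel, since the only difference is whether one reasons with $\models_{\nonprob}$ and Lemma \ref{validitieslemma} or with $\models_{\nonprob}^{\downarrow}$ and Lemma \ref{validitieslemmaashalting}. Soundness is routine: $\textsf{PC}$ is sound because the $\mathcal{L}_{\mathrm{ineq}}$-atoms are treated propositionally and satisfaction is defined by recursion; the $\textsf{Ineq}$ module is sound because each of its axioms is a valid fact about linear inequalities over the reals, interpreting every $\mathbb{P}(\varphi_i)$ as the number $\mathbb{P}_{\mathsf{T}}(\varphi_i)$; and $\textsf{Prob}$ is sound because $\mathbb{P}_{\mathsf{T}}$ is an honest probability measure. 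Indeed $\textsf{NonNeg}$ and $\textsf{Norm}$ are immediate from the definition $\mathbb{P}_{\mathsf{T}} = \mu \circ S$, while $\textsf{Add}$ and $\textsf{Dist}$ are exactly part (3) and the corollary of part (2) of Proposition \ref{coherence}.

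The work is in completeness, for which I would follow the strategy of \cite{Fagin}, the new content being the passage between the axiomatic/arithmetical layer and actual simulation models. Suppose $f \in \mathcal{L}$ is consistent; I want a model satisfying it. Using $\textsf{PC}$, put $f$ in disjunctive normal form and keep one consistent disjunct, so $f$ may be assumed a conjunction of $\mathcal{L}_{\mathrm{ineq}}$-literals, each rewritten by the $\textsf{Ineq}$ axioms into the normal form (\ref{atomicweightformula}). Let $\langle\alpha_1\rangle\beta_1,\dots,\langle\alpha_k\rangle\beta_k$ be the finitely many $\mathcal{L}_{\mathrm{cond}}$-atoms occurring in $f$, call a truth assignment $s$ to these atoms a \emph{state} with $\mathrm{state}_s$ the conjunction describing it, and call $s$ \emph{consistent} when $\mathrm{state}_s$ is $\models_{\nonprob}$-satisfiable. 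Every $\varphi_i$ in $f$ is $\models_{\nonprob}$-equivalent to the disjunction of the $\mathrm{state}_s$ over consistent $s$ with $s \models \varphi_i$, so by repeated use of $\textsf{Add}$, $\textsf{Dist}$, $\textsf{NonNeg}$, and $\textsf{Norm}$ the formula $f$ becomes equivalent to a system of linear (in)equalities in the variables $x_s = \mathbb{P}(\mathrm{state}_s)$, augmented by $x_s \ge 0$ and $\sum_s x_s = 1$ (inconsistent states carry probability $0$, since $\models_{\nonprob}\lnot\mathrm{state}_s$ forces $S(\mathrm{state}_s)=\emptyset$ by Lemma \ref{validitieslemma}, or a null set by Lemma \ref{validitieslemmaashalting}). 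Exactly as in \cite{Fagin}, consistency of $f$ implies this system is solvable over $\mathbb{R}$ — were it unsolvable, a Farkas-style infeasibility certificate would be mirrored by $\textsf{AddIneq}$, $\textsf{Mult}$, $\textsf{Dichotomy}$, and $\textsf{Mono}$ to derive $\bot$ — and, the coefficients being integers, it is then solvable over $\mathbb{Q}$; fix such a rational solution $\{x_s\}$.

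It remains to realize $\{x_s\}$ by an actual probabilistic simulation model, the genuinely new step. For each consistent state $s$, completeness of the non-probabilistic logic \cite{ibeling} furnishes a witness $\mathsf{T}_s$ with $\mathsf{T}_s \models_{\nonprob} \mathrm{state}_s$ (for the $\downarrow$ case, with $\mathsf{T}_s \in \mathcal{M}^{\downarrow}_{\nonprob}$). I would build $\mathsf{T}$ to first consume random bits to select a state $s$ with probability exactly $x_s$ — possible for rational $x_s$ by rejection sampling on blocks of fresh coin flips — and then simulate $\mathsf{T}_s$. The selection is arranged to use scratch squares disjoint from the finitely many variables mentioned in $f$; then every intervention $\mathcal{I}_{\alpha_i}$ occurring in $f$ leaves the selection phase untouched, so under each such intervention $\mathsf{T}$ picks $s$ with the intended, intervention-independent probability $x_s$ and thereafter behaves as $\mathcal{I}_{\alpha_i}(\mathsf{T}_s)$. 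Hence $\mathbb{P}_{\mathsf{T}}(\mathrm{state}_s)=x_s$ and each $\mathbb{P}_{\mathsf{T}}(\varphi_i)$ takes the value that satisfies $f$.

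I expect the realization step to be the main obstacle, and within it the $\mathcal{M}^{\downarrow}$ case the delicate part: membership in $\mathcal{M}^{\downarrow}$ demands almost-sure halting under \emph{every} intervention, not only those in $f$, yet a naive rejection-sampling selection can be driven into an infinite loop by an adversarial intervention that pins one of its scratch squares. I would repair this with the toggling test of \cite{ibeling}: before sampling, $\mathsf{T}$ checks whether any of its scratch squares is being held fixed, and if so — which can happen only under interventions not mentioned in $f$, as those touch only the disjoint variable block — it abandons sampling and halts with a default choice. Since each $\mathsf{T}_s$ already halts under every intervention, this renders $\mathsf{T}$ almost-surely halting under all interventions while preserving its behavior under the interventions of $f$, so that $\mathsf{T} \in \mathcal{M}^{\downarrow}$ and the construction goes through in both cases.
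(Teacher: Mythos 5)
Your proposal follows essentially the same route as the paper's proof: soundness from Proposition \ref{coherence} together with Lemmas \ref{validitieslemma} and \ref{validitieslemmaashalting}, and completeness by reducing a consistent formula to a Fagin-style linear system over the probabilities of the state descriptions (your $\mathrm{state}_s$ are exactly the paper's $\delta_i$), extracting a rational solution, and realizing it with a branching program that randomly selects among non-probabilistic canonical witnesses on disjoint scratch squares. Your extra guard for the $\mathcal{M}^{\downarrow}$ case---using the toggling test to ensure the state-selection loop cannot be stalled by an intervention outside those mentioned in $f$---is a refinement the paper glosses over rather than a different argument, and the construction is otherwise identical to Lemma \ref{makeprobshold}.
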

\begin{proof}
Soundness (of $\textsf{Prob}$) follows from Lemma \ref{validitieslemma}, Proposition \ref{coherence}, and, for the almost-surely
halting case, Lemma \ref{validitieslemmaashalting}.
For completeness,
consider the general case of $\mathcal{M}$ first.
As usual, it suffices to show
that any consistent $\varphi \in \mathcal{L}$
is satisfiable by some probabilistic simulation model.
We put $\varphi$ into a normal form from which we construct a canonical model.
By
$\textsf{PC}$
we
may
suppose $\varphi$ is in disjunctive normal form. We may further
suppose
that it is a conjunction of $\mathcal{L}_{\mathrm{ineq}}$-literals, as
at least one (conjunctive) clause in the disjunctive normal form must be consistent.
Let $a_1, \dots, a_n \in \mathcal{L}_{\mathrm{cond}}$ be the atoms that appear
inside any probability $\mathbb{P}$ in $\varphi$, and
let $\delta_{1}, \dots, \delta_{2^n}$ represent all
the formulas of the form $l_1 \land \dots \land l_n$
that can be obtained by setting each $l_i$ to either $a_i$ or $\lnot a_i$.
We then have the following, which is a kind of normal form result:
\begin{lemma}[Lemma 2.3, \cite{Fagin}]
\label{bigconjunction}
$\varphi$ is provably-in-$\textsf{AX}$ equivalent to a conjunction
\begin{align}
\label{normalform}
(\mathbb{P}(\delta_1) \ge 0) \land \dots \land (\mathbb{P}(\delta_{2^n}) \ge 0) &\land \nonumber\\
(\mathbb{P}(\delta_1) + \dots + \mathbb{P}(\delta_{2^n}) = 1) &\land\nonumber\\
(a_{1, 1} \mathbb{P}(\delta_1) + \dots + a_{1, 2^n} \mathbb{P}(\delta_{2^n}) \le c_1) &\land\nonumber\\
\dots &\land \nonumber\\
(a_{m, 1} \mathbb{P}(\delta_1) + \dots + a_{m, 2^n} \mathbb{P}(\delta_{2^n}) \le c_m) &\land\nonumber\\
(a'_{1, 1} \mathbb{P}(\delta_1) + \dots + a'_{1, 2^n} \mathbb{P}(\delta_{2^n}) > c'_1) &\land\nonumber\\
\dots &\land \nonumber\\
(a'_{m', 1} \mathbb{P}(\delta_1) + \dots + a'_{m', 2^n} \mathbb{P}(\delta_{2^n}) > c'_{m'})
\end{align}
for some integer coefficients $c_1,\dots, c_m,c_1,\dots, c'_{m'}, a_{1,1}, \dots, a_{m, 2^n}, a'_{1,1}, \dots, a'_{m', 2^n}$.
\end{lemma}
\begin{proof}
Let $\psi \in \mathcal{L}_{\nonprob}$ be any of the formulas appearing inside of a probability $\mathbb{P}$ in
$\varphi$.
Note that $\mathbb{P}(\psi) = \mathbb{P}(\psi \land l_1) + \mathbb{P}(\psi \land \lnot l_1)$ by
$\textsf{Add}$. Moving on to $l_2$, we have, provably, $\mathbb{P}(\psi \land l_1) = \mathbb{P}(\psi \land l_1 \land l_2)
+ \mathbb{P}(\psi \land l_1 \land \lnot l_2)$, and we may rewrite $\mathbb{P}(\psi \land \lnot l_1)$ similarly.
Applying this process successively, we have $\mathbb{P}(\psi) = \mathbb{P}(\psi \land \delta_1) +
\dots + \mathbb{P}(\psi \land \delta_{2^n})$. For any term in the right-hand side of this inequality,
if $\psi \Rightarrow \delta_i$, propositional reasoning by $\textsf{Dist}$ allows us to replace the term by
$\mathbb{P}(\delta_i)$, and if not, by $0$.
Thus we always have that 
$\mathbb{P}(\psi) = b_1 \mathbb{P}(\delta_1) + \dots + b_{2^n} \mathbb{P}(\delta_{2^n})$
for some coefficients $b_i$.
Applying this process to each $\mathbb{P}$-term in $\varphi$ and using
$\textsf{Ineq}$ to rewrite the left-hand sides of the inequalities, and conjoining
the (clearly provable) clauses that $\mathbb{P}(\delta_i) \ge 0$ for all $1 \le i \le 2^n$,
and $\mathbb{P}(\delta_1) + \dots + \mathbb{P}(\delta_{2^n}) = 1$, we obtain (\ref{normalform}).
\qed
\end{proof} 

The conjunction (\ref{normalform})
can be seen as a system of simultaneous inequalities over $2^n$ unknowns,
$\mathbb{P}(\delta_1), \dots, \mathbb{P}(\delta_{2^n})$.
$\textsf{Ineq}$ is actually sound and complete for such systems (we refer the
reader to Section 4 of \cite{Fagin} for the proof of this fact). So if $\varphi$ is consistent with $\textsf{AX}$---which includes
$\textsf{Ineq}$---this system must have a solution.
Thus there are values $\mathbb{P}(\delta_i)$ solving (\ref{normalform}).
We will now construct a probabilistic simulation model having precisely these probabilities
of satisfying each $\delta_i$.
Note that for any $\delta_i$ with $\models_{\nonprob} \bot \leftrightarrow \delta_i$
it is provable that $\mathbb{P}(\delta_i) = 0$, and we may conjoin this to
(\ref{normalform}). 
Note also that $\delta_i \land \delta_j$
is unsatisfiable for any $i \neq j$.
Given these two observations,
the following Lemma
implies the result.
\begin{lemma}
\label{makeprobshold}
For any collection of satisfiable
$\mathcal{L}_{\nonprob}$-formulas $\varphi_1, \dots, \varphi_n$
no two of which are jointly satisfiable, and
any
rational probabilities
$p_1, \dots, p_n \ge 0$
such that $p_1+\dots+p_n = 1$, there is a probabilistic simulation model
$\mathsf{T}$ such that $\mathbb{P}_{\mathsf{T}}(\varphi_i) = p_i$
for all $i$, $1 \le i \le n$.
\end{lemma}
\begin{proof}
Since the $\varphi_i$ are satisfiable, there are non-probabilistic
simulation models $\mathsf{T}_{\nonprob, 1}, \dots \mathsf{T}_{\nonprob, n}$
such that for all $i = 1, \dots, n$, we have $\mathsf{T}_{\nonprob, i} \models_{\nonprob} \varphi_i$.
Further, we may suppose the machines so constructed use only a bounded number of memory tape squares.\footnote{
Why? Since $\varphi_i$ are satisfiable, they are consistent with the axiomatization for non-probabilistic
simulation models given by \cite{ibeling}, and hence are satisfied by the canonical models given in \cite{ibeling}.
These models use only boundedly many tape squares.
}
Thus let the maximum index of a tape square used by any of the $\mathsf{T}_{\nonprob, i}$ be $N$.
We now describe $\mathsf{T}$ informally.
Suppose without loss of generality that for all $i$, $p_i = a_i / b$ for some common denominator $b$.
Let $\mathsf{T}$ draw a random number $r$ from $1$ up to $b$ uniformly, and ensure that $\mathsf{T}$ does any auxiliary
computations it might need only on squares
with indices at least $N+1$.
Check whether $r \le a_1$, and if so,
let $\mathsf{T}$ branch into the code of $\mathsf{T}_{\nonprob, 1}$.
If not, check if $a_1+1 \le r \le a_1 + a_2$
and if so, branch into $\mathsf{T}_{\nonprob, 2}$.
Repeat the process for $p_3, \dots, p_n$.
It's clear that the probability
of branching into each $\mathsf{T}_{\nonprob, i}$ block is exactly $p_i$,
and the same is true
under any relevant (i.e., involving only memory tape variables that appear in one of the $\varphi_i$) intervention
on $\mathsf{T}$:
we may suppose
any auxiliary computations
$\mathsf{T}$ might require use only memory tape squares with indices past $N$.
After branching into the $i$th block, the behavior
of $\mathsf{T}$ is
exactly
the same as that of $\mathsf{T}_{\nonprob, i}$, meaning that any random bit tape fixings
that end up causing a branch into this block will belong to $S(\varphi_i)$.
Another random bit tape fixing that causes a branch into another block, say the $j$th,
cannot belong to $S(\varphi_i)$ since 
$\varphi_i, \varphi_j$ are jointly unsatisfiable.
Thus, $\mathbb{P}_{\mathsf{T}}(\phi_i) = p_i$
for all $i$.
\qed
\end{proof}
Finally, we must see that this model lies in $\mathcal{M}^{\downarrow}$
if the original formula is consistent with $\textsf{AX}^{\downarrow}$.
\cite{ibeling} has shown that $\models_{\nonprob}^{\downarrow} [\alpha]\beta \rightarrow \langle\alpha\rangle\beta$. 
Then in the proof of Lemma \ref{makeprobshold}, we may suppose
that each $\mathsf{T}_{\nonprob, i}$ block contains only always-halting code,\footnote{
Since the canonical programs of \cite{ibeling} for $\mathcal{M}^{\downarrow}_{\nonprob}$ contain only such code.
} and
hence that $\mathsf{T}$ does not contain any loops either: thus it almost-surely halts.
\qed
\end{proof}

\section{Computational Complexity}
Call the problem of deciding if a formula $\varphi \in \mathcal{L}$
is satisfiable $\textsc{Prob-Sim-Sat}(\varphi)$. Theorem \ref{probsimsat} shows
that solving this problem is no more complex than is propositional satisfiability.
\begin{theorem}
\label{probsimsat}
$\textsc{Prob-Sim-Sat}(\varphi)$ is $\mathsf{NP}$-complete in $|\varphi|$ (where this length
is computed standardly).
\end{theorem}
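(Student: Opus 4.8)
The plan is to prove both directions of the NP-completeness claim, treating the case of $\mathcal{M}$ (the almost-surely-halting variant being entirely analogous). For \textbf{hardness}, I would reduce propositional satisfiability. Given a propositional formula $\phi$ over Boolean variables $p_1, \dots, p_k$, replace each $p_i$ by the $\mathcal{L}_{\mathrm{ineq}}$-atom $A_i := (\mathbb{P}(\langle\rangle X_i) \ge 1)$, obtaining $\varphi \in \mathcal{L}$ of size linear in $|\phi|$. In any model each $A_i$ has a definite truth value, so $\mathsf{T} \models \varphi$ forces the induced assignment to satisfy $\phi$. Conversely, any assignment $\nu$ satisfying $\phi$ is realizable: the model that writes $X_i = 1$ for each $i$ with $\nu(p_i) = 1$ and, for each $i$ with $\nu(p_i) = 0$, flips a coin and writes $X_i = 1$ only on heads, then halts, gives $\mathbb{P}_{\mathsf{T}}(\langle\rangle X_i) = 1$ exactly when $\nu(p_i) = 1$ (and $1/2$ otherwise). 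Hence $\varphi$ is satisfiable iff $\phi$ is, and the reduction is polynomial.

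For \textbf{membership}, I follow the strategy of \cite{Fagin}. First nondeterministically guess a truth assignment to the $\mathcal{L}_{\mathrm{ineq}}$-atoms occurring in $\varphi$ that makes $\varphi$ true propositionally; it then suffices to satisfy the resulting conjunction of (possibly strict) inequality literals. Let $a_1, \dots, a_n$ be the conditional atoms occurring, so $n \le |\varphi|$, and recall from Lemma \ref{bigconjunction} that satisfiability is equivalent to feasibility of a linear system over the $2^n$ unknowns $\mathbb{P}(\delta_1), \dots, \mathbb{P}(\delta_{2^n})$, where each $\delta_j$ is a complete sign vector over the $a_i$. The obstacle is that this system is exponentially large, so I cannot simply solve it directly.

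The key is the small-solution bound for linear programs used in \cite{Fagin}: if such a system (in nonnegative variables constrained to sum to $1$) is feasible, it has a feasible point whose support has size at most the number of constraints $+\,1$ and whose nonzero coordinates are rationals of bit-length polynomial in the input. I would therefore guess a support $J$ of polynomially many sign vectors $\delta_j$ together with rational values $p_j > 0$ of polynomial size, and verify in polynomial time that (i) $\sum_{j \in J} p_j = 1$; (ii) every guessed inequality holds, where the coefficient of $\mathbb{P}(\delta_j)$ in a term $\mathbb{P}(\psi)$ is computed as the Boolean value of $\psi$ under the truth assignment $\delta_j$ (this keeps all coefficients polynomially bounded); and (iii) each $\delta_j$ is $\models_{\nonprob}$-satisfiable. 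If these checks pass, then since the $\delta_j$ are pairwise jointly unsatisfiable by construction, Lemma \ref{makeprobshold} produces a probabilistic simulation model with $\mathbb{P}_{\mathsf{T}}(\delta_j) = p_j$, which satisfies $\varphi$.

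The main remaining difficulty is step (iii): certifying $\models_{\nonprob}$-satisfiability of each $\delta_j$ within NP. Here I would appeal to \cite{ibeling}, whose axiomatization of $\mathcal{L}_{\nonprob}$ is witnessed by canonical models using only boundedly many tape squares; bounding this by a polynomial and checking the finitely many relevant intervened outcomes against the conditional literals and the axioms yields a polynomial certificate (and likewise $\models_{\nonprob}^{\downarrow}$-satisfiability for the $\mathcal{M}^{\downarrow}$ variant, using the always-halting canonical models). Folding these certificates into the overall nondeterministic guess keeps the whole procedure in NP; combined with hardness this gives NP-completeness. I expect establishing that the base conditional logic admits polynomial-size satisfiability certificates — rather than merely a decidable small-model property — to be the crux of the argument.
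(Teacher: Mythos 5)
Your proposal is correct and follows essentially the same route as the paper: $\mathsf{NP}$-hardness by reduction from propositional satisfiability, and membership by combining the Fagin-style small-solution bound for the linear system over the $\mathbb{P}(\delta_j)$ with the polynomial-size canonical models and polynomial-time model checking for the base conditional logic from \cite{ibeling} --- which is exactly how the paper discharges the ``crux'' you identify at the end. The only differences are presentational: the paper's hardness reduction uses the single formula $\mathbb{P}(\langle\rangle \pi) > 0$ rather than per-variable atoms, and it packages the membership certificate as a guessed program from a restricted syntactic class (a uniform random branch selector with polynomially many canonical non-probabilistic branches) rather than as the raw support-plus-probabilities data, but the two certificates encode the same information.
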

\begin{proof}
It's $\mathsf{NP}$-hard since, given any propositional $\pi$, the formula
$\mathbb{P}(\langle\rangle \pi) > 0$ is satisfiable iff $\pi$ is satisfiable
(consider a machine that does nothing but write a satisfying memory tape assignment out).
In order to show that the satisfiability problem is in $\mathsf{NP}$,
we
give the following nondeterministic satisfiability algorithm: guess a program
from a class of programs (that we will define shortly) that includes the program constructed
in Lemma \ref{makeprobshold}
---call this canonical program $\mathsf{T}_{\varphi}$---and check (in polynomial time) if
it satisfies $\varphi$. This algorithm decides satisfiability since, by soundness, a satisfiable formula
must be consistent, and hence has a canonical model of the form constructed in
Lemma \ref{makeprobshold}. For the remainder of the proof, by the ``length of a number,''
we just mean the length of its computer (binary) representation. The ``length of a rational''
is the sum of the lengths of its numerator and its denominator.

What is the class of probabilistic simulation models that we may limit our guesses to?
For some fixed constants $C, D \in \mathbb{N}$,
we will define a class $\mathcal{M}_{\varphi, C, D}$.
We will then show that there exist $C, D$ such that the canonical program of
Lemma \ref{makeprobshold} belongs to $\mathcal{M}_{\varphi, C, D}$ for all consistent $\varphi$.
Let $\mathcal{M}_{\varphi, C, D}$
be
the fragment of probabilistic simulation models
whose code consists of the following:
\begin{enumerate}
\item Code to draw a random number uniformly between $1$ and some $N$, such that $N$ has length at most $D |\varphi|^3$.
\item At most $n = C | \varphi|$ \emph{branches}, that is, copies of: an if-statement 
with condition $\ell \le r \le u$,
whose body is a canonical program $\mathsf{P}_{\psi_i}$ for some $\psi_i \in \mathcal{L}_{\nonprob}$, of
the same form as the non-probabilistic canonical models (i.e., in the class defined in the proof of Theorem 2 from \cite{ibeling}).
\end{enumerate}
Letting $\ell_i, u_i$ be the bounds for the $i$th copy in (2), we also require that $\ell_1 = 1$,
and that $\ell_{i+1} = u_i + 1$ for all $i$, and that $u_n = N$. 
The following fact from linear algebra (we refer the reader to \cite{Fagin} for the proof)
helps us to show that for all consistent $\varphi$, the canonical program $\mathsf{T}_{\varphi}$
belongs to $\mathcal{M}_{\varphi, C, D}$ for some $C, D$.
\begin{lemma}
\label{limitsolutions}
A system of $m$ linear inequalities 
with integer coefficients of length at most $\ell$
that has a nonnegative solution has
a nonnegative solution with at most $m$ variables nonzero, and where
the variables have length at most $\mathcal{O}(m \ell + m \log m)$.
\qed
\end{lemma}
Apply this lemma to (\ref{normalform}). Each inequality in (\ref{normalform}) originally
came from $\varphi$, so there are $\mathcal{O}(|\varphi|)$ of them. Further, recall
that
each
integer
coefficient in (\ref{normalform}) came from summing up a subset of $2^n$ coefficients
originally from $\varphi$, with $n$ is the number of atoms appearing anywhere
inside $\mathbb{P}$ expressions in $\varphi$.
As this $n$ is thus $\mathcal{O}(|\varphi|)$---and hence $2^n$ is $\mathcal{O}(|\varphi|)$ \emph{in length}---and
each  original coefficient is also $\mathcal{O}(|\varphi|)$ in length, each coefficient is $\mathcal{O}(|\varphi|)$ in length as well
(lengths of products add).
Thus Lemma \ref{limitsolutions} shows that without loss of generality, we may
suppose that the solutions for the $\mathbb{P}(\delta_i)$ of (\ref{normalform}) have
$\mathcal{O}(|\varphi|^2)$ length. The common denominator of these $\mathcal{O}(|\varphi|)$
rationals hence has
$\mathcal{O}(|\varphi|^3)$ length. The construction of Lemma \ref{makeprobshold}
has one branch for each of them, and hence $\mathcal{O}(|\varphi|)$ branches.
This shows the existence of $D$ for part (1) of the definition of $\mathcal{M}_{\varphi, C, D}$
and the existence of a $C$ for part (2). We will abbreviate
$\mathcal{M}_{\varphi} = \mathcal{M}_{\varphi, C, D}$
for some choice of $C, D$ thus guaranteed.

It remains to show that given any program $\mathsf{T} \in \mathcal{M}_{\varphi}$,
we can check if $\mathsf{T} \models \varphi$ in polynomial time. It suffices to show that
checking if $\mathsf{T} \models \psi$ for $\psi \in \mathcal{L}_{\mathrm{ineq}}$ is polynomial
time: if we know whether $\mathsf{T} \models \psi$
for every $\psi$ that $\varphi$ is built out of, we can decide in linear time if $\mathsf{T} \models \varphi$.
Thus suppose $\psi$ has the form $a_1 \mathbb{P}(\varphi_1) + \dots +
a_n \mathbb{P}(\varphi_n) \le c$.
\cite{ibeling} shows that one may check if the $\mathsf{P}_{\psi_i}$ in part (2) of the definition of $\mathcal{M}_{\varphi}$
satisfy any formula of the basic language
$\mathcal{L}_{\nonprob}$ in polynomial time.
Then we can easily compute $\mathbb{P}(\varphi_i)$
as simply the sum of the probabilities of each branch that satisfies $\varphi_i$.
Doing the arithmetic to check if $\psi$ is satisfied is then certainly polynomial time, so we have our result.
\qed
\end{proof}

\section{Conclusion and Future Work}
We have defined and obtained foundational results concerning a very natural extension of counterfactual intervention
on simulation models to the probabilistic case.

One critical operation in probability is \emph{conditioning}, or updating probabilities
given that some event is known to have occurred (in the subjective interpretation, updating
a belief for known information). One may already define conditional probabilities in the usual way
in the current framework, and our framework (without interventions) covers the
\emph{conditional simulation} approach to certain aspects of common-sense reasoning of \cite{DBLP:books/cu/p/FreerRT14}.
In this approach, one limits oneself to the runs satisfying a certain query; the framework considered
here would be equivalent for any queries expressible as formulas of $\mathcal{L}_{\nonprob}$.
\cite{Fagin} also give a logic for reasoning about conditional probabilities. Future work would involve extending
this system to probabilistic simulation models and studying the complexity of reasoning in that setting.

As \cite{AC17,ibeling} note, the simulation model approach invalidates many
important logical principles that are valid in other approaches \cite{Halpern2000,Pearl2009,lewis73},
such as \emph{cautious monotonicity}:
$[A] (B \land C) \rightarrow [A \land B] C$.
However the approach is otherwise quite general, and
an important future direction would be to identify and characterize subclasses of of simulation models that
validate this and other similar logical principles. We have begun investigating this extension.
An interesting consequence it
has is on
the comparison of conditional probability with the probabilities of subjunctive conditionals: while these
two probabilities are not in general equal in the classes $\mathcal{M}$ or $\mathcal{M}^{\downarrow}$, they are equal in certain
restricted classes.

A final direction we want to mention concerns ``open-world'' reasoning
including first-order reasoning
about models with some domain, where counterfactual antecedents might alter how
many individuals are being considered or which individuals fall under a property or bear certain
relations to each other.
Recursion and the tools of logic programming \cite{Goodman2014,BLOG} make this very
natural for the simulation model approach, and we would like to understand
the first- and higher-order conditional logics that result in this approach, in both the non-probabilistic
and probabilistic cases. We have also begun exploring this direction.

\bibliographystyle{splncs04}
\bibliography{ms}
\end{document}